\newtheorem{theorem}{\textbf{Theorem}}
\newcommand{\tf}[1]{\mathbf{#1}}
\newcommand{\SFpair}{\left\{\Phix,\Phiu\right\}}
\newcommand{\Phix}{\tf \Phi_x}
\newcommand{\Phiu}{\tf \Phi_u}
\newcommand{\Phixt}{\Phi_x}
\newcommand{\Phiut}{\Phi_u}
\DeclareMathOperator*{\argmin}{arg\,min}
\newcommand{\sectionnew}[1]{\vspace{0.5mm}

\noindent{\textbf{#1}}}
\newcommand{\sectionnewit}[1]{\vspace{0.5mm}\noindent{\underline{\textit{#1}}}}
\title{\Large \bf
Robust Disturbance Rejection for Robotic Bipedal Walking: \\ System-Level-Synthesis with Step-to-step Dynamics Approximation}
    \author{Xiaobin Xiong, Yuxiao Chen, and Aaron D. Ames
    \thanks{*This work is supported by NSF grant 1924526 and 1923239.}
\thanks{The authors are with the Department of Mechanical and Civil Engineering, California Institute of Technology, Pasadena, CA, {\tt\small \{xxiong, yxchen, ames\}@caltech.edu}}}
\begin{document}
\maketitle
\thispagestyle{empty}
\pagestyle{empty}

\begin{abstract}
We present a stepping stabilization control that addresses external push disturbances on bipedal walking robots. The stepping control is synthesized based on the step-to-step (S2S) dynamics of the robot that is controlled to have an approximately constant center of mass (COM) height. We first learn a linear S2S dynamics with bounded model discrepancy from the undisturbed walking behaviors of the robot, where the walking step size is taken as the control input to the S2S dynamics. External pushes are then considered as disturbances to the learned S2S (L-S2S) dynamics. We then apply the system-level-synthesis (SLS) approach on the disturbed L-S2S dynamics to robustly stabilize the robot to the desired walking while satisfying the kinematic constraints of the robot. We successfully realize the proposed approach on the walking of the bipedal robot AMBER and Cassie subject to push disturbances, showing that the approach is general, effective, and computationally-efficient for robust disturbance rejection.
\end{abstract}

\section{Introduction}
Bipedal robots are showing premises of entering real life to perform meaningful tasks in human society \cite{johnson2015team}. Various methodologies such as the zero-moment-point (ZMP) \cite{kajita2003biped} and the hybrid-zero-dynamics (HZD) \cite{westervelt2003hybrid, grizzle2014models} framework have been proposed in the literature to generate bipedal robotic walking. These methods typically decompose the walking controllers into two components: walking trajectory planning in the configuration or state-space and low-level output stabilization on the trajectories. For instance, the HZD framework relies on offline parameter optimization to generate periodic orbits in a low-dimensional manifold that is hybrid invariant; feedback controllers are then synthesized to make the manifold attractive and thus render stable orbits. 

Despite the similarities and differences in various approaches to walking generation, the commonality is on the need to provide robustness for the controlled walking to model discrepancy and external disturbances \cite{stephens2007humanoid,learningPushRecovery, rebula2007learning}. The overarching goal is to prevent the robot from falling as this can lead to catastrophic damages to the hardware. Typically, robustness evaluation is done via pushes on the walking robot. The synthesis of push-robust controllers on walking robots, however, remains an open and challenging problem; this is because bipedal robots are high-dimensional and their walking dynamics are hybrid in nature.

Canonical studies on this problem mostly come from the robotics community, where the push-robust controllers \cite{stephens2007humanoid} are heuristically decomposed into three general principles: the hip strategy, ankle strategy, and the stepping stabilization \cite{pratt2006capture, sugihara2009standing}. The synthesis of heuristic controllers when experiencing push disturbances are oftentimes based on simple dynamic models that approximate the robot dynamics. As a result, the final control formulations thus have little characterization of robustness and typically do not guarantee the satisfaction of kinematic feasibility; e.g, the desired step size may exceed the kinematic limit of the robot. 


 \begin{figure}[t]
      \centering
      \includegraphics[width = .85\columnwidth]{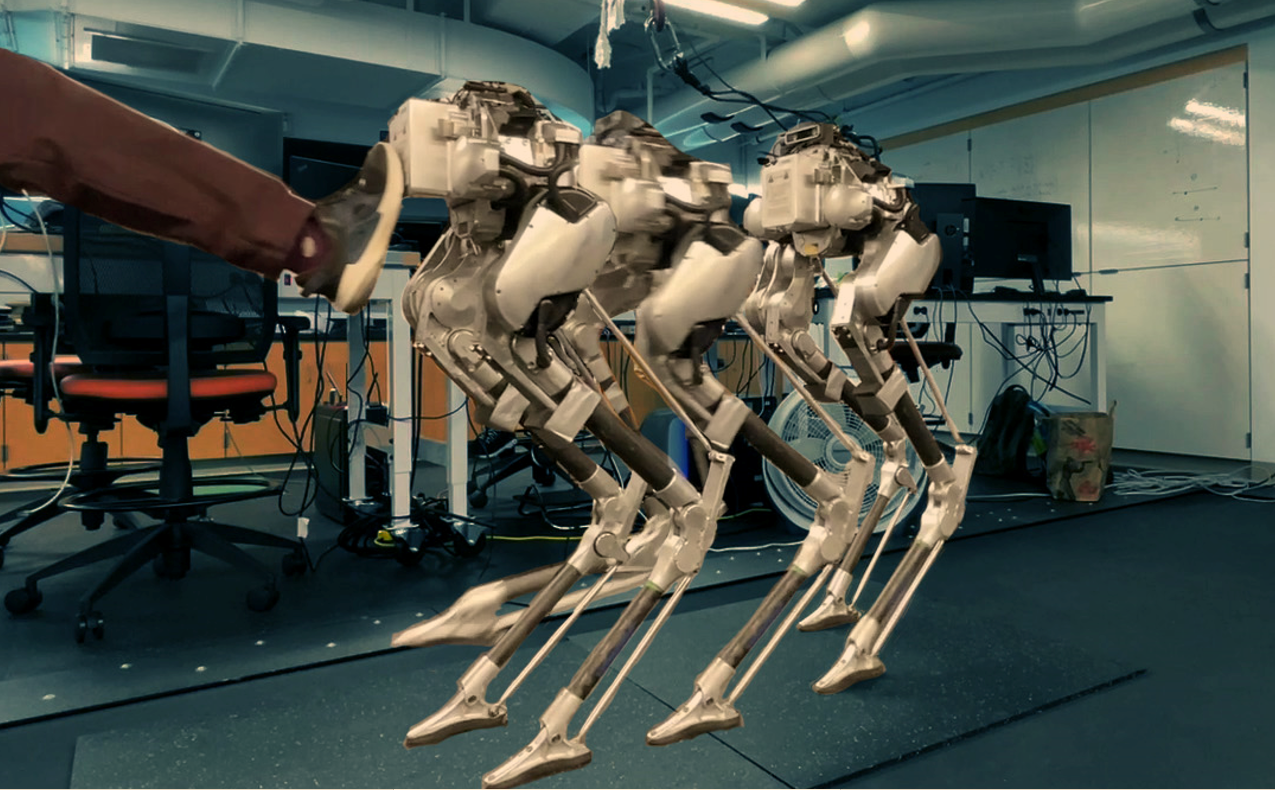}
      \caption{Push rejection on the robot Cassie using the proposed approach.}
      \label{fig:overview}
\end{figure}


In this paper, we focus on the in-depth study of robust bipedal stabilization that can reject push disturbances while respecting the kinematic feasibility of the physical robot. We first learn a step-to-step (S2S) dynamic model from the push-free robotic walking, which is generated using existing framework in \cite{xiong20213d}. The learned S2S (L-S2S) dynamics is a discrete dynamical system with the walking step size being the inputs. Periodic walking behaviors can then be characterized on the L-S2S dynamics. Next, we approximate the external push as an external disturbance to the L-S2S dynamics. Finally we apply the system-level-synthesis (SLS) approach on the L-S2S dynamics to generate desired step sizes that stabilize the walking subject to the push disturbance. Importantly, since the state and input constraints (i.e., system constraints) on the L-S2S dynamics can be encoded in the SLS \cite{chen2019system}, kinematic constraints on the robot state and step sizes are respected. 

The proposed approach can be viewed as an extension of stepping controllers \cite{xiong20213d, xiong2020ral, pratt2012capturability, bhounsule2020approximation} that uses the S2S dynamics approximation to plan foot steps for bipedal walking stabilization. Here, the approximation of the S2S is directly learned from the walking data of the robot. More importantly, we find that the same parameterized L-S2S dynamics sufficiently approximates the actual S2S dynamics of two different robots with trivial model discrepancies. The L-S2S dynamics is also linear and thus creates a framework for linear controllers to be formally applied to hybrid nonlinear dynamical systems such as walking robots. 

The application of the SLS also shows a great advantage over previous synthesis such as deadbeat control \cite{xiong20213d,pratt2012capturability, bhounsule2016dead} and Linear Quadratic Regulators (LQR) \cite{boyd1991linear} in \cite{xiong2020ral} in bipedal stepping. The feedback controller obtained from the SLS includes the system constraints while optimizing a cost function of the states and inputs, which is very important in practice since the system constraints on the S2S dynamics are the kinematic constraints of the robot. It also renders the closed-loop system finite-impulse response (FIR), which rejects disturbances within finite footsteps. Additionally, the SLS optimization is convex, thus easily implementable, and results in a closed-form stepping controller.

We realize the proposed approach on two bipedal robots AMBER and Cassie, with the result being walking under bounded push disturbances. The SLS controller not only stabilizes the disturbed robotic walking in finite steps, but also respects the kinematic constraints in all steps. The realized walking is also more accurate in terms of velocity tracking. The rest of the paper is organized as follows. Section \ref{sec:prelim} introduces the recent development of stepping controller based on the S2S dynamics of walking. Section \ref{sec:RS2S} presents the system identification of the robotic S2S dynamics, and Section \ref{sec:push} formulates the robust stepping stabilization problem for push disturbances. Then, we apply the SLS approach on the stepping stabilization in Section \ref{sec:SLS}. Finally, we evaluate the approach in Section \ref{sec:results}, and conclude the paper in Section \ref{sec:conclude}.







\section{Preliminary: Stepping Control Based on S2S Dynamics}
\label{sec:prelim}
We now introduce the stepping controller that is based on the approximation of the step-to-step (S2S) dynamics. We first define the S2S dynamics of bipedal walking and then introduce the Hybrid Linear Inverted Pendulum (H-LIP) based walking synthesis \cite{xiong20213d}: the S2S dynamics of the H-LIP is used to approximate the robot S2S dynamics. A state-feedback stepping controller, namely H-LIP stepping \cite{xiong2020ral, xiong20213d}, is then synthesized to discretely control the S2S state of the robot to achieve desired walking behaviors.

\subsection{Hybrid Dynamics and Step-to-step Dynamics}
Bipedal robotic walking is typically modeled as a hybrid dynamical system \cite{grizzle2014models} that undergoes continuous dynamics and discrete transitions. The robot is modeled as a rigid body system, and the continuous dynamics can be derived from the Lagrangian mechanics. The impact between the foot and the ground is typically modeled as plastic impact. The hybrid dynamics can be briefly described by 
\begin{align}
  &  \dot{x} = f_n(x, \tau),\\
   & x^+ = \Delta_{n \rightarrow n+1}(x^-),
\end{align}
where $x$ is the state of the robot, $f_n$ represents the nonlinear dynamics in the domain denoted by $n$, $\tau$ stands for the actuation, $\Delta_{n \rightarrow n+1}$ represents the discrete transition from the domain $n$ to the domain $n+1$, and the superscript $^{+/-}$ indicate the state after/before the discrete transition. 

The hybrid dynamics can be converted to a discrete step-to-step (S2S) dynamics \cite{xiong2020ral, xiong20213d} which then facilitates step planning to stabilize walking. Let $x^-$ denote the pre-impact state before the swing foot strikes the ground. Assume the existence of the next foot-strike. The current pre-impact state with the continuous joint actuation will determine the next pre-impact state:
$
    x^-_{k+1} = \mathcal{P}(x^-_k,  \tau(t)),
$
which is referred to as the S2S dynamics of the full state of the robot with $k$ being the index of the step. $t$ represents for the continuous time. For walking, we mainly care about the evolution of the weakly-actuated horizontal center of mass (COM) state: the horizontal COM position w.r.t. the stance foot $p$ and its velocity $v$, the S2S dynamics of which is:
\begin{equation}
\label{eq:robotS2S}
    \mathbf{x}_{k+1} = \mathcal{P}_{\mathbf{x}}(x^-_k,  \tau(t)).
\end{equation}
$\mathbf{x} = [p, v]^T$ stands for the pre-impact horizontal COM state. In the latter, we will denote Eq. \eqref{eq:robotS2S} as the S2S dynamics of the robot since the horizontal COM state is most effectively stabilized via the S2S dynamics. 

 \begin{figure}[t]
      \centering
      \includegraphics[width = .85\columnwidth]{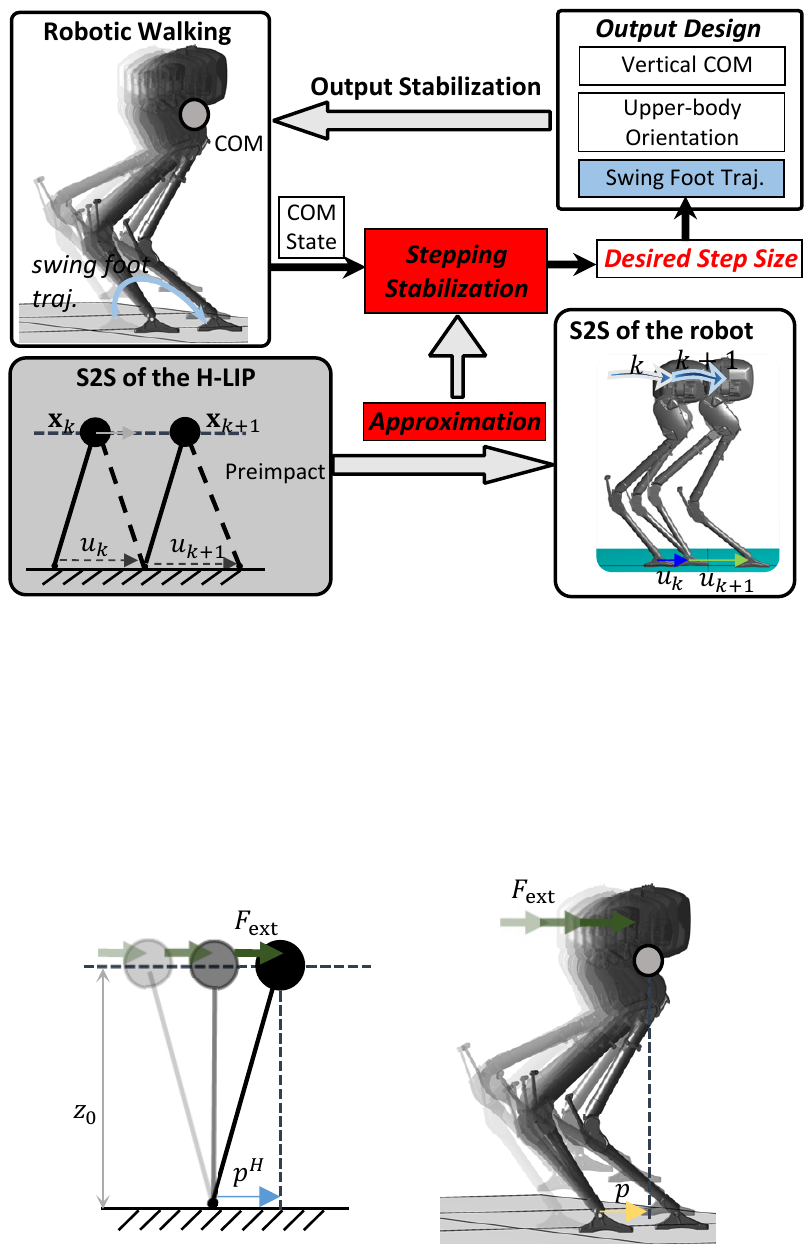}
      \caption{Illustration of the walking generation based on the S2S dynamics. This paper mainly focuses on the red boxed components: the S2S dynamics approximation and the synthesis of the robust stepping stabilization.}
      \label{fig:prelim}
\end{figure}

\subsection{S2S Dynamics Approximation via Hybrid-LIP}
The continuous dynamics of the robot is highly nonlinear, thus the analytical form of the S2S dynamics of the robot is difficult to obtain. \cite{xiong20213d} proposed an approximation to the robot S2S dynamics by the S2S dynamics of the Hybrid Linear Inverted Pendulum (H-LIP) while controlling the vertical COM of the robot to be (approximately) constant. The model discrepancy can be treated as bounded disturbance for a wide range of practically-realizable walking behaviors. The S2S dynamics of the H-LIP is:
\begin{equation}
\label{eq:HLIPs2s}
    \mathbf{x}^H_{k+1} = A \mathbf{x}^H_k + B u^H_k,
\end{equation}
where $\mathbf{x}^H = [p^H, v^H]^T$ and $u^H$ are the pre-impact state and the step size of the H-LIP, respectively. The derivation and expressions of $A$ and $B$ are detailed in \cite{xiong20213d}. Using the H-LIP as an \textit{approximation}, the robot S2S dynamics is rewritten as:
\begin{align}
\label{eq:RobotS2Sapprox}
\mathbf{x}_{k+1} = A \mathbf{x}_{{k}}  +  B u_{k} + w_m,
\end{align}
where $w_m := \mathcal{P}_{\mathbf{x}}(x^-_k,  \tau(t))- A \mathbf{x}_{{k}}  -  B u_{k}$, and $u$ is the step size of the robot. $w_m$ is the model discrepancy: the integration of the difference of the continuous horizontal COM dynamics difference between the robot and the H-LIP over a step. Given a range of walking behaviors, corresponding $w_m$ belongs to a bounded set $\mathbf{W}_m$. Applying the H-LIP based stepping (with $K$ being static feedback gain matrix)
 \begin{equation}
 \label{eq:HLIPstepping}
u = u^{H} + K (\mathbf{x}_{k} - \mathbf{x}_{k}^{H} )
\end{equation}
yields the \textit{error dynamics}: $
    \mathbf{e}_{k+1} = (A+BK) \mathbf{e}_{k}  + w_m,
$
where $\mathbf{e} := \mathbf{x} - \mathbf{x}^H$ is the error state. A selection of $K$ to make $A+BK$ stable then drives $\mathbf{e}$ to converge to a minimal robust positively invariant set \cite{1582504} $\mathbf{E}$, i.e., if $\mathbf{e}_k \in \mathbf{E}$, $\mathbf{e}_{k+1} \in \mathbf{E}$. 
In \cite{xiong20213d,xiong2020ral}, Eq. \eqref{eq:HLIPstepping} is applied to realize walking behaviors on the robot using deadbeat controllers or linear quadratic regulators (LQR) to design the feedback gain $K$. 



\subsection{Output Construction and Stabilization}
The H-LIP stepping provides the desired step sizes for the robot to realize. The desired walking trajectories (output trajectories) of the robot are then constructed to realize a constant COM height, a time-based vertical swing foot trajectory to periodically lift-off and strike the ground, and finally the horizontal swing foot trajectory to realize the desired step size. 3D robotic walking is decoupled into sagittal and lateral planes. In each plane, the horizontal COM states are stabilized individually to its desired walking. Additional desired output trajectories include these of the upper body configuration and swing foot orientation. The output stabilization can be realized via optimization-based controllers such as the control Lyapunov function based Quadratic Programs \cite{ames2014rapidly}. Fig. \ref{fig:prelim} illustrates the previous framework for walking realization on bipedal robots.

\textbf{Remark:}  In this paper, we will focus on the improvement of the stepping stabilizing controller and keep the output construction and output stabilizing controllers in \cite{xiong20213d} intact, including the parameters of the trajectories and feedback gains in the output stabilization.

\section{Learning S2S Dynamics Approximation}
\label{sec:RS2S}

The H-LIP based approximation was proposed as a general model for bipedal walking, thus the model difference $w_m$ between the H-LIP and an actual physical robot may not be well characterized. In this section, we consider a data-driven approximation of the actual S2S dynamics of the robot to reduce the model discrepancy. Given a range of walking behaviors under the H-LIP based approach \cite{xiong20213d}, we formulate a linear program to learn a linear dynamic model that best approximates the robot S2S dynamics with minimum bounds on the model discrepancy. Then, we characterize the periodic walking behaviors on the learned S2S (L-S2S) dynamics, which are later used for synthesizing robust stepping controllers on the robot.    


\subsection{Data-Driven S2S Dynamics}
Assume the walking dataset that covers a wide range of velocities is given by the H-LIP approach in \cite{xiong20213d} with the same feedback gains and gait parameters. Consider the \textit{actual S2S dynamics} of the robot is in the following form:
\begin{equation} 
\label{eq:learnedS2S}
    \mathbf{x}_{k+1} = \bar{A} \mathbf{x}_k + \bar{B} u_k + \bar{C} + \epsilon, 
\end{equation}
where $\bar{A} \in \mathbb{R}^{2\times 2}$, $\bar{B} \in \mathbb{R}^{2\times 1}$, and $\bar{C} \in \mathbb{R}^{2\times 1}$ are the parameters of the L-S2S to be learned from the data. $\epsilon\in \mathbb{R}^{2\times 1}$ is the residual. The above equation can be transformed into a linear equation:
$o_k q_\text{L-S2S} + \epsilon = \mathbf{x}_{k+1}$ with 
$q_\text{L-S2S} = [\bar{A}_{(1,1)}, \bar{A}_{(1,2)},\bar{A}_{(2,1)}, \bar{A}_{(2,2)},  \bar{B}_1, \bar{B}_2, \bar{C}_1, \bar{C}_2], \nonumber
$
where the subscripts indicate the corresponding element in the matrix, and
$
o_k = \begin{bmatrix} p_{k} & v_k& 0& 0& u_k& 0& 1& 0 \\
0&0&  p_{k}& v_k&0&  u_k& 0& 1 \end{bmatrix}.
$
For the purpose of obtaining a minimum bound on the residual, instead of solving a least-square problem, we solve a $\mathcal{L}_\infty$ regression via a linear program:
\begin{align}
\label{eq:L-S2S}
[\bar{A}, \bar{B}, \bar{C}, d^*] &= \argmin \limits_{q_\text{L-S2S}\in\mathbb{R}^8, d\in\mathbb{R}^2} \mathds{1}^\intercal d\\
\mathrm{s.t.}~ \forall k,& -d\le o_k q_\text{L-S2S} -\mathbf{x}_{k+1}\le d. \nonumber
\end{align}
Solving this optimization yields the linear dynamics in Eq. \eqref{eq:learnedS2S} with 
$
    \epsilon \in \mathbf{D} :=  \mathop{\otimes}\nolimits_i [-d^*_i, d^*_i], 
$
where $\mathbf{D}$ represents the polytopic set that contains all $\epsilon$, and $\otimes$ denote the Kronecker product. In practice, we will show that $d^*$ and thus $\mathbf{D}$ are very small. Now, $\epsilon$ is the bounded model discrepancy, which can be treated as both state and input \textit{independent}. The \textit{learned S2S dynamics} (L-S2S) of the robot then is: 
\begin{equation} 
\label{eq:rS2S}
    \mathbf{x}_{k+1} = \bar{A} \mathbf{x}_k + \bar{B} u_k + \bar{C}.
\end{equation}

\vspace{-1mm}
\sectionnewit{Comparison:} The L-S2S dynamics is different than the linear S2S dynamics of the H-LIP in Eq. \eqref{eq:HLIPs2s}. Unlike the H-LIP, we no longer have a physical model or hybrid dynamics that results in the L-S2S dynamics. The term $\bar{C}$ captures the dynamics effect on the stepping to some extend. For instance, a step with $\mathbf{x}_k = [0,0]^T$ and $u_k = 0$ results in $\mathbf{x}_{k+1} =[0,0]^T$ in the H-LIP dynamics; the same step in the L-S2S dynamics results in $\mathbf{x}_{k+1} = \bar{C} \neq [0,0]^T$. The physical meaning is that when the COM projects on the stance foot with the applied step size being 0, the COM state in the next step will not be zero due to the dynamics effect of swinging the leg. 


\subsection{Orbit Characterization of L-S2S }
Before synthesizing the stepping controller based on the L-S2S dynamics, we need to first characterize the desired walking behaviors. The periodic walking will be directly represented by the state of the L-S2S dynamics, which represents the pre-impact state of the robotic walking. In the following, we briefly present the characterization of the Period-1 (P1) and Period-2 (P2) orbits, which can then be composed for 3D bipedal walking \cite{xiong20213d}. 

\vspace{0.5mm}
\noindent{\textbf{Period-1 Orbit:}} P1 orbits are the one-step orbits, i.e., one step of walking completes one orbit in the continuous state-space. The desired step size of a P1 orbit is determined by the desired walking velocity $v^d$. Given a fixed step duration $T$, $u^* = v^d T$, and the corresponding state of the L-S2S is:
\begin{equation}
\label{eq:p1_L_S2S}
\mathbf{x}^* = (I - \bar{A})^{-1}( \bar{B} u^* + \bar{C}),
\end{equation}
which is solved by letting $\mathbf{x}_{k+1} =\mathbf{x}_{k} $ in Eq. \eqref{eq:rS2S}. $I$ is the identity matrix. Therefore, given a desired walking velocity, the desired pre-impact state and the desired step size of the P1 orbit are directly identified. 

\vspace{0.5mm}
\noindent{\textbf{Period-2 Orbit:}} P2 orbits are the two-step orbits, i.e., it takes two steps to complete a periodic walking. Let the subscript $_\text{L/R}$ denote the left or right legs. The P2 orbit that realizes a desired velocity $v^d$ is not unique \cite{xiong20213d}. The sum of the step sizes $u_{ \sum} := u^*_\text{L} + u^*_\text{R} = 2 v^d T$. Selecting one step size then determines the orbit. Solving $\mathbf{x}_{k+2} = \mathbf{x}_k$ with the L-S2S dynamics yields the corresponding pre-impact states: 
\begin{equation}
\label{eq:p2_L_S2S}
\mathbf{x}^*_\text{L/R} = (I - \bar{A}^2)^{-1}( (\bar{A}\bar{B} - \bar{B})u^*_{\text{L/R}} +  \bar{B} u_{ \sum} + (\bar{A}+I)\bar{C}). 
\end{equation}
Note that P2 orbits are mainly used in 3D walking \cite{xiong20213d}.  

\textbf{Remark:} The state-feedback stepping controller in Eq. \eqref{eq:HLIPstepping} can be directly applied for stepping stabilization using the L-S2S dynamics. One only needs to replace $u^\text{H}$ and $\mathbf{x}^\text{H}$ (of the H-LIP) by $\mathbf{x}^*$ and $u^*$ (of the L-S2S); $\mathbf{x}^*$ is calculated in Eq. \eqref{eq:p1_L_S2S} for P1 orbits and in Eq. \eqref{eq:p2_L_S2S} for P2 orbits. 


\section{Robust Stepping Stabilization based on L-S2S}
\label{sec:push} 
We now present the problem formulation of the robust stepping stabilization based on the L-S2S dynamics. In particular, we consider characterizing the continuous external pushes as the disturbance to the L-S2S dynamics so that robust push-rejecting stepping controllers can be synthesized. 

\subsection{Push Disturbances to S2S Dynamics}
A push is modeled as an external horizontal force $F_\text{ext}$ with a pushing duration. The Euler-Lagrangian equation of the robot in the continuous dynamics is, 
\begin{equation}
    M(q) \ddot{q} + h(q, \dot{q}) =  \tau + J_c^T F_\text{ext}(t),
\end{equation}
where $q$ is the minimum representation of the robot configuration, $M(q)$ is the inertia matrix, $h(q, \dot{q})$ is the Coriolis, centrifugal, and gravitational term, $\tau$ represents the joint motor torques, and $J_c$ is the Jacobian of the position of the push. It is obvious that $F_\text{ext}$ directly affects the continuous dynamics and then the S2S dynamics. The disturbed S2S dynamics can be represented by: 
\begin{equation}
    \mathbf{x}^-_{k+1} = \mathcal{P}(\mathbf{x}^-_k,  \tau(t)) +  \mathcal{P}^{\text{ext}}_\mathbf{x}(F_\text{ext}(t), x^-_k, \tau(t), t_0, t_F), \nonumber
\end{equation}
where $t_0, t_F$ denote the time of the start and end of the push, and $\mathcal{P}_{\text{ext}}$ stands for the influence of the push to the S2S dynamics. One can quickly realize that the push component $\mathcal{P}_{\text{ext}}$ cannot be obtained analytically. 
 \begin{figure}[b]
      \centering
      \includegraphics[width = 0.65\columnwidth]{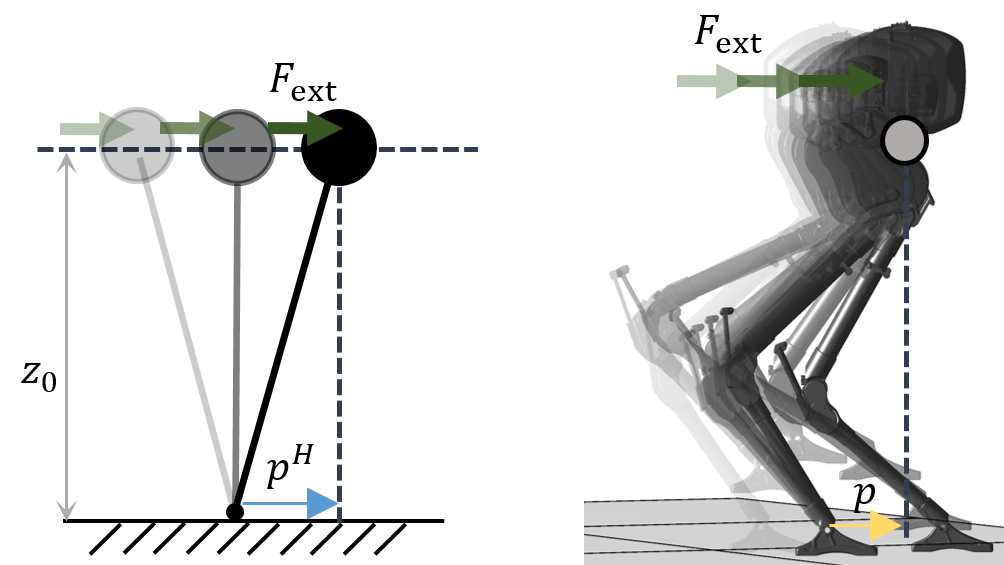}
      \caption{Illustration of the external push force that is applied to the pendulum model (left) and the robot (right).}
      \label{fig:LIPapprox}
\end{figure} 
To characterized $\mathcal{P}^{\text{ext}}$ approximately, we use the H-LIP to approximate the robot dynamics again (illustrated in Fig. \ref{fig:LIPapprox}). Without losing generality, we also make a few assumptions for simplification. We assume that the horizontal push force is constant and only happens in the entire single support phase (SSP). Consecutive pushes do not happen within $N_\text{push} >2$ steps. Consider the dynamics of the H-LIP in the SSP with a push: 
\begin{equation}
  \frac{d}{dt} \begin{bmatrix} p^H \\ v^H \end{bmatrix} = \begin{bmatrix} 0 & 1 \\ \lambda^2 & 0\end{bmatrix}\begin{bmatrix} p^H \\ v^H \end{bmatrix} + \begin{bmatrix} 0 \\ \frac{1}{m} \end{bmatrix} F_{\text{ext}},
\end{equation}
where $m$ is the mass, and $\lambda  = \sqrt{g\slash z_0}$ with $z_0$ being the constant height of the COM \cite{xiong20213d}. Here, $p^H$ is the horizontal position of the mass, and $v^H = \dot{p}^H$ is the horizontal velocity in continuous time. This linear time invariant system has a closed-form solution. Moreover, the disturbed S2S of the H-LIP becomes $ \mathbf{x}^H_{k+1} = A  \mathbf{x}^H_{k+1} + B u^H + w_\text{ext}$ with
\begin{align}
     w_\text{ext} = F_\text{ext} \textstyle \frac{\text{sinh}(T_S \lambda)}{m\lambda} \begin{bmatrix} \frac{1}{\sigma_1} & 1\end{bmatrix}^T, \label{eq:Fext2wext}
\end{align}
where $\sigma_1 = \lambda \text{coth}(\frac{T_S }{2}\lambda)$ is the orbital slope \cite{xiong20213d} of P1 orbits of the H-LIP. We now have a mapping in Eq. \eqref{eq:Fext2wext} from the external push $F_\text{ext}$ to the disturbance $w_\text{ext}$ to the S2S dynamics. Assuming bounded $F_\text{ext}$, $w_\text{ext}$ is bounded to a set, i.e., $w_\text{ext} \in \mathbf{W}_\text{ext}$. We will use this mapping to approximately quantify the push disturbance to the robot S2S dynamics. One can also apply data-driven approaches to characterize $\mathcal{P}_{\text{ext}}$ from disturbed walking data in simulation or experiment. Regardless, the following robust stepping synthesis problem remains the same as long as we consider robustly stabilizing bounded push disturbances. 


\subsection{Problem Formulation for Robust Stepping Stabilization}
With the L-S2S dynamics and its orbit characterizations, a similar state-based feedback controller in Eq. \eqref{eq:HLIPstepping} can be directly applied for disturbance-free walking realization. Here, we want to synthesize a stepping controller that not only realizes walking that is robust to external push disturbances but respects the kinematic feasibility of the robot. 


\subsubsection{Objective} Suppose we want to achieve a certain walking behavior described by the L-S2S dynamics. Let $u^*$ be the desired step size and $\mathbf{x}^*$ be the desired pre-impact horizontal COM state. The general formulation for stabilizing the robot to the desired walking of the L-S2S dynamics is:
 \begin{equation}
 \label{eq:RS2SsteppingGeneral}
u = u^* + u^e,
\end{equation}
where $u$ is the step size that will be applied to the robot, and $u^e$ is the step size that we need to synthesize. The closed-loop system of the S2S dynamics of the robot becomes:
\begin{equation}
\label{eq:cl_sys}
    \mathbf{x}_{k+1} = \bar{A} \mathbf{x}_{{k}}  +  \bar{B} (u^*_k + u^e_k) + \bar{C} +  \epsilon  +  w_\text{ext}. 
\end{equation}
Similar to the previous construction in Section \ref{sec:prelim}, we define the error state as $\mathbf{e} = \mathbf{x} - \mathbf{x}^*$. Subtracting Eq. \eqref{eq:cl_sys} by Eq. \eqref{eq:rS2S} yields the error Dynamics:
\begin{align}
\label{eq:errorDynamics}
 \mathbf{e}_{k+1} = \bar{A} \mathbf{e}_{k} + \bar{B} u^e +   w, 
\end{align}
where $w: = w_\text{ext} + \epsilon$. $w \in \mathbf{W}: = \mathbf{W}_\text{ext} \oplus \mathbf{D}$. Thus, the goal is to synthesize a controller that stabilizes $\mathbf{e} \rightarrow 0$ subject to the new disturbance $w$ (model discrepancy $\epsilon$ $+$ external push disturbance $w_\text{ext}$). 



\subsubsection{Kinematic Constraints} We also need the stepping controller to be aware of the kinematic feasibility of the robot. It is obvious that the robot cannot realize arbitrary step size during walking. The horizontal COM position w.r.t. to the stance foot also belongs to a bounded set. We leave the identification of the constraints in the Appendix. Generally speaking, each set can be identified via sampling the kinematic space of the robot subject to certain vertical COM height and swing foot positions. The state and input constraints are concisely defined as follows.

\sectionnewit{State Constraint:} Let $\mathbf{X}$ represent the set of the horizontal COM state of the robot. Then, $\mathbf{x} \in\mathbf{X}$,
$\mathbf{e} \in \mathbf{X}^e = \mathbf{X} -  \mathbf{x}^*.$

\sectionnewit{Input Constraint:} Let $\mathbf{U}$ represent the set of feasible step sizes; $u \in \mathbf{U}$. From Eq. \eqref{eq:RS2SsteppingGeneral}, $\mathbf{U}^e = \mathbf{U} - u^*$. Therefore, the stepping controller should be synthesized subject to $u^e \in \mathbf{U}^e$.

Therefore, given a desired steady state walking from the L-S2S, $u^*$ and $\mathbf{x}^*$ have a one-to-one mapping in closed-form; the state and input constraints are then identified from the robot kinematics for the robust stepping synthesis. 



\section{SLS For Robust Stepping Stabilization}
\label{sec:SLS}
To realize the robust stepping stabilization, we use system-level-synthesis (SLS) to design a dynamic feedback controller that renders the closed-loop system 1) stable with finite impulse response (FIR), i.e., recovers from any external disturbance in finite number of steps, and 2) satisfies the state and input constraints under any disturbance in $\mathbf{W}$. 
We begin by representing a brief overview of the SLS framework and then apply it to the stepping problem. The readers are referred to~\cite{anderson2019system, WanMD18} for a more complete picture of SLS.

\subsection{Review: System-Level-Synthesis}
Consider a general discrete linear dynamic system
\begin{equation}\label{eq:dyn}
    x_{k+1}=A x_k + B u_k+w,
\end{equation}
where $x\in\mathbb{R}^n$ is the state, $u\in\mathbb{R}^m$ is the control input and $w\in\mathbb{R}^n$ is the disturbance. The SLS takes into account the system dynamics and directly optimizes over the closed-loop map from disturbance to state and control action:
\begin{align} \label{eq:sysresp}
\begin{bmatrix} \tf x \\ \tf u \end{bmatrix} = \begin{bmatrix} \Phix \\ \Phiu \end{bmatrix}\tf w,
\end{align}
where $\tf x$, $\tf u$, and $\tf w$ are the state, input, and disturbance signals, and
\begin{align}
\Phix = (zI-A -B\tf K)^{-1},~
\Phiu = \tf K(zI-A -B\tf K)^{-1}. \nonumber
\end{align}

The control synthesis problem is to design a dynamic state-feedback policy $\tf u = \tf K \tf x$, and SLS does so by directly optimizing the closed-loop system responses $\SFpair$ (by choice of $\tf K$). Any stable and strictly-proper transfer matrices $\SFpair$ that satisfy the affine expression
\begin{equation}\label{eq:affine}
\begin{bmatrix} zI - A & -B\end{bmatrix} \begin{bmatrix} \Phix \\ \Phiu \end{bmatrix}=I,
\end{equation}
can be used to construct an internally stabilizing controller $\tf K = \Phiu \Phix^{-1}$. To make the control synthesis more intuitive, we work in the time domain with a convolutional representation of the system response given by
\begin{align}
x_k = \textstyle  \sum_{i=1}^{\infty}\Phixt[i]w(k-i),  ~
u_k =   \sum_{i=1}^{\infty}\Phiut[i]w(k-i). \nonumber
\end{align}
The relationship between $\Phix,\Phiu$ and $\Phixt [1],\hdots,$ and $\Phiut [1],\hdots$ is given through the spectral decomposition of a transfer matrix: $\Phix = \sum_{i=0}^{\infty} \Phixt[i]z^{-i}$, $\Phiu = \sum_{i=0}^{\infty} \Phiut[i]z^{-i}$. To make the synthesis tractable, typically a finite impulse response (FIR) constraint is added, i.e., $\Phix = \sum_{i=0}^{N_\text{F}} \Phixt[i]z^{-i}$, $\Phiu = \sum_{i=0}^{N_\text{F}} \Phiut[i]z^{-i}$, where $N_\text{F}$ is the FIR horizon. 

The SLS framework was proposed to handle locality and sparsity constraints for a distributed control problems. Here we do not have locality constraints, instead, we are concerned with the input and state constraints in the form of linear inequalities. We adopt the solution proposed in \cite{chen2019system} where the goal is to synthesize a controller such that for all $\mathbf{w}$ that satisfies $G[k] w_k\le g[k],k=0,...,N_\text{F}-1$, the closed loop system satisfies $H[k]\begin{bmatrix}
x_k\\u_k
\end{bmatrix}\le h[k]$ for $k=1,...,N_\text{F}$. The SLS synthesis problem with state and input constraints is formulated as a robust optimization and solved with the following equivalent linear program:
\begin{equation}\label{eq:conv_problem}
  \begin{aligned}
  \mathop {\min }\limits_{{\Phix},{\Phiu},\Lambda\ge0} \quad &J({\Phix},{\Phiu})&\\
 \rm{s.t.}\; \quad\quad ~& \textstyle \begin{bmatrix}
    Iz-A & -B
  \end{bmatrix}
  \begin{bmatrix}
    \Phix \\
    \Phiu
  \end{bmatrix}=I \\
  &\Phix,\Phiu\in  \textstyle \frac{1}{z}\mathcal{RH}_\infty\cap FIR(N_\text{F}), \\
  &\forall i=1,...,N_\text{F},\; j=0,..,i-1\\
  & H[i]\Phi[i-j]  = \Lambda[i,j]G[j],\\
       & \textstyle \sum_{j=0}^{i-1}\Lambda[i,j] g[j]\le h[i],
  \end{aligned}
\end{equation}
where $\frac{1}{z}\mathcal{RH}_\infty$ denotes the space of stable transfer functions, and $FIR(N_\text{F})$ denotes the transfer functions with a FIR horizon $N_\text{F}$. Further details can be found in \cite{chen2019system}.

\subsection{Application to Bipedal Robotic Push Rejection}
We apply SLS on the error dynamics where the goal is to keep $\mathbf{e}_k\in \mathbf{X}^e$ with $u^e_k\in\mathbf{U}^e$ for all $k\ge 0$.

To handle external push via SLS, we need $N_\text{push}\ge N_\text{F}$, i.e., the SLS controller should recover from the external push before the next push happens. We consider the following profile of bounds on the disturbance signal $\mathbf{w}$:
\begin{align}\label{eq:w_bound}
    &w_i\in\mathcal{S}_0,&i&=0\\
    &w_i\in\mathbf{W}_\text{ext}\oplus\mathbf{D}, &i&=1\\
    &w_i\in \mathbf{D},&i&=2,...,N_\text{F}-1
\end{align}
where $\mathcal{S}_0$ is the set of possible initial state $\mathbf{e}_1$ since in SLS synthesis, $w_0$ is simply the initial state $\mathbf{e}_1$. The push happens at $i=1$, therefore the possible disturbance at $i=1$ is the modeling error plus the external push. For $2\le i\le N_\text{F}-1$, the disturbance is simply the modeling error since no external push is allowed before $N_\text{push}-1$ steps after the last push.

The input constraint is fairly simple, for all $1\le i\le N_\text{F}$, $u^e \in\mathbf{U}^e$. The state constraint is set as follows:
\begin{align}
    &\mathbf{e}_i\in\mathbf{X}^e, &i&=1,...,N_\text{F}-1\\
    &\mathbf{e}_i\in\mathcal{S}_0,&i&=N_\text{F}
\end{align}
The last constraint makes sure that the error after $N_\text{F}-1$ steps of the push is within $\mathcal{S}_0$. Since all the above constraints are linear inequalities, the SLS can be solved with Eq. \eqref{eq:conv_problem}.
\begin{theorem}
Suppose the SLS synthesis is feasible with $N_\text{F}\le N_\text{push}$ and some $\mathcal{S}_0\subseteq\mathbf{X}^e$, the initial condition of the robot satisfies $\mathbf{e}_0\in\mathcal{S}_0$, then the closed-loop system satisfies $\forall k\ge0, \mathbf{e}_k\in\mathbf{X}^e, u^e_k\in\mathbf{U}^e$.
\end{theorem}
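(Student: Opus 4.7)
The plan is an inductive sliding-window argument that exploits time-invariance of the SLS controller and the FIR horizon $N_\text{F}$. Let $0<t_1<t_2<\cdots$ denote the push times, with $t_{j+1}-t_j\ge N_\text{push}\ge N_\text{F}$ by the spacing assumption, and set $t_0:=0$. I would show inductively that $\mathbf{e}_{t_j}\in\mathcal{S}_0$ and that the constraints $\mathbf{e}_k\in\mathbf{X}^e$, $u^e_k\in\mathbf{U}^e$ hold throughout the interval $[t_{j-1},t_j]$. The base case is exactly the hypothesis $\mathbf{e}_0\in\mathcal{S}_0\subseteq\mathbf{X}^e$.

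For the inductive step, I would freeze the closed-loop controller and analyze the $N_\text{F}$-length window $[t_j,t_j+N_\text{F}]$. By time-invariance, this window can be identified with a fresh SLS problem in which $w_0=\mathbf{e}_{t_j}\in\mathcal{S}_0$, the push appears at index $1$ (so the effective $w_1$ lies in $\mathbf{W}_\text{ext}\oplus\mathbf{D}$), and the subsequent disturbances are pure model error (hence in $\mathbf{D}$)---precisely the profile in Eq.~\eqref{eq:w_bound} that the synthesis was made feasible against. Feasibility of the linear program in Eq.~\eqref{eq:conv_problem} then delivers $\mathbf{e}_k\in\mathbf{X}^e$ and $u^e_k\in\mathbf{U}^e$ over the window, as well as the terminal containment $\mathbf{e}_{t_j+N_\text{F}}\in\mathcal{S}_0$. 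If $t_{j+1}>t_j+N_\text{F}$, the no-push sub-interval $[t_j+N_\text{F},t_{j+1}]$ is handled by re-invoking the same window argument with zero push (admissible since $0\in\mathbf{W}_\text{ext}$, so $\mathbf{D}\subseteq\mathbf{W}_\text{ext}\oplus\mathbf{D}$), sliding the window forward each time a fresh $\mathcal{S}_0$-containment is recovered, until $t_{j+1}$ is reached.

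The main obstacle is justifying the window-to-window ``restart.'' The SLS-synthesized $\tf K$ is a single time-invariant dynamic feedback; it does not literally reset at each push. What saves the argument is the FIR constraint $\Phi_x,\Phi_u\in \mathrm{FIR}(N_\text{F})$, which truncates the closed-loop map from disturbance signal to state and input: $\mathbf{e}_k$ is an affine function of only the last $N_\text{F}$ entries of $\tf w$, with the first of those entries playing the role of the ``initial condition'' of a shifted SLS problem. This truncation is exactly what converts the single-window robust feasibility into an infinite-horizon invariance statement, provided each shifted window has its $w_0$-slot in $\mathcal{S}_0$---which is what the inductive hypothesis furnishes via $\mathbf{e}_{t_j}\in\mathcal{S}_0$. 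I would spend the bulk of the write-up making this reduction rigorous, i.e.\ aligning SLS indexing to physical time and verifying that no hidden controller memory spoils the identification of $\mathbf{e}_{t_j}$ with the $w_0$-slot of the next window.
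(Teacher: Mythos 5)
Your proposal is correct and follows essentially the same route as the paper's proof: induction on push times, with the FIR property reducing each step to the single-window robust feasibility of the synthesis, and a sliding-window argument covering the push-free intervals. You are in fact somewhat more explicit than the paper about the two points it glosses over --- that the restart is justified by the FIR truncation of the closed-loop maps, and that the push-free windows conform to the disturbance profile only because $\mathbf{D}\subseteq\mathbf{W}_\text{ext}\oplus\mathbf{D}$ (i.e.\ $0\in\mathbf{W}_\text{ext}$).
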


\begin{proof}
The proof is by induction. Let $t_i$ be the timing of the $i$-th push. Suppose $\mathbf{e}_{t_i}\in\mathcal{S}_0$, then the disturbance signal from $t_i$ to $t_i+N_\text{F}$ satisfies Eq. \eqref{eq:w_bound}. Therefore, for all $k=t_i,...,t_i+N_\text{F}$, $\mathbf{e}_k\in\mathbf{X}^e$, $u_k\in\mathbf{U}^e$, and $\mathbf{e}_{t_i+N_\text{F}}\in\mathcal{S}_0$. For all $k=t_i+N_\text{F}+1,...,t_{i+1}$ since the disturbance signal within $[k-N_\text{F}+1,k]$ satisfies Eq. \eqref{eq:w_bound} (no push happens in the meantime), $\mathbf{e}_k\in\mathcal{S}_0\subseteq\mathbf{X}^e$, $u_k\in\mathbf{U}^e$. Therefore, $\mathbf{e}_{t_i}\in\mathcal{S}_0$. Since the initial condition satisfies $\mathbf{e}_0\in\mathcal{S}_0$, the proof is completed by induction.
\end{proof}

\section{Evaluation on Bipedal Walking Robots}
\label{sec:results}
In this section, we evaluate the proposed SLS-based stepping controller on the walking control of the planar bipedal robot AMBER and the 3D bipedal robot Cassie \cite{xiong20213d}. The purpose of the evaluation on different robots is to demonstrate the generality of the proposed approach. Despite that the two robots have significantly different morphology and inertia distributions, the proposed stepping controller effectively stabilizes the robotic walking to desired behaviors with strong robustness against push disturbances.

\subsection{Robot Models and Control Strategy}
{\small \sectionnew{AMBER}} is a planarized bipedal robot with two point-feet, two hip, and two knee joints. The walking is a single-domain hybrid system consisting of a SSP and a discrete impact event. The point-foot contact with the ground renders the continuous dynamics of the angular momentum about the foot (approximately the horizontal COM) underactuated. Following the H-LIP approach in \cite{xiong20213d}, we define the output of walking to be the combination of the torso angle $q_\text{torso}$, the vertical COM position and the swing foot position $[x_\text{sw}, z_\text{sw}]$:
\begin{align}
    \mathcal{Y} = & \begin{bmatrix}
     z_\text{COM} &
     x_\text{sw} &
     z_\text{sw} &
    q_\text{torso}
    \end{bmatrix}^T \nonumber \\ &-
 \label{eq:AMBERoutput}
    \begin{bmatrix}
     z^d_\text{COM}  &
  x^d_\text{sw}(q, \dot{q}, t) &
  z^d_\text{sw}(t) &
      q^{d}_\text{torso}
    \end{bmatrix}^T
\end{align}
where $z^d_\text{COM} = z_0$ is the desired constant COM height, $q^{d}_\text{pelvis}$ is the desired pelvis angle, and $[x^d_\text{sw}, z^d_\text{sw}]$ represent the desired swing foot position w.r.t. the stance foot. $z^d_\text{sw}(t)$ is designed to periodically lift off and strike the ground. $x^d_\text{sw}$ is constructed to achieve the step size $u$. We apply the output construction in \cite{xiong20213d}; e.g., the desired horizontal swing foot is:
$ x^d_\text{sw} = (1-c(t))x^+_\text{sw} +c(t) u$, 
where $x^+_\text{sw}$ is the horizontal position of the swing foot in the beginning of the step, and $c(t)$ is a Bézier polynomial that transits from 0 to 1 within the step. 


\sectionnew{Cassie} is a 3D bipedal robot, the walking of which is modeled as a two-domain hybrid dynamical system that contains a SSP and a double support phase (DSP). 
Due to the space constraint, we refer the readers to \cite{xiong20213d} for more details on the robot model and the output definitions. Loosely speaking, the output is designed to be the combination of the torso orientation, the vertical COM position, the swing foot position, and the swing foot orientation. Similarly, the desired swing foot trajectories are designed so that the swing foot periodically steps to achieve the desired step size $u$.

 \begin{figure}[t]
      \centering
      \includegraphics[width = .7\columnwidth]{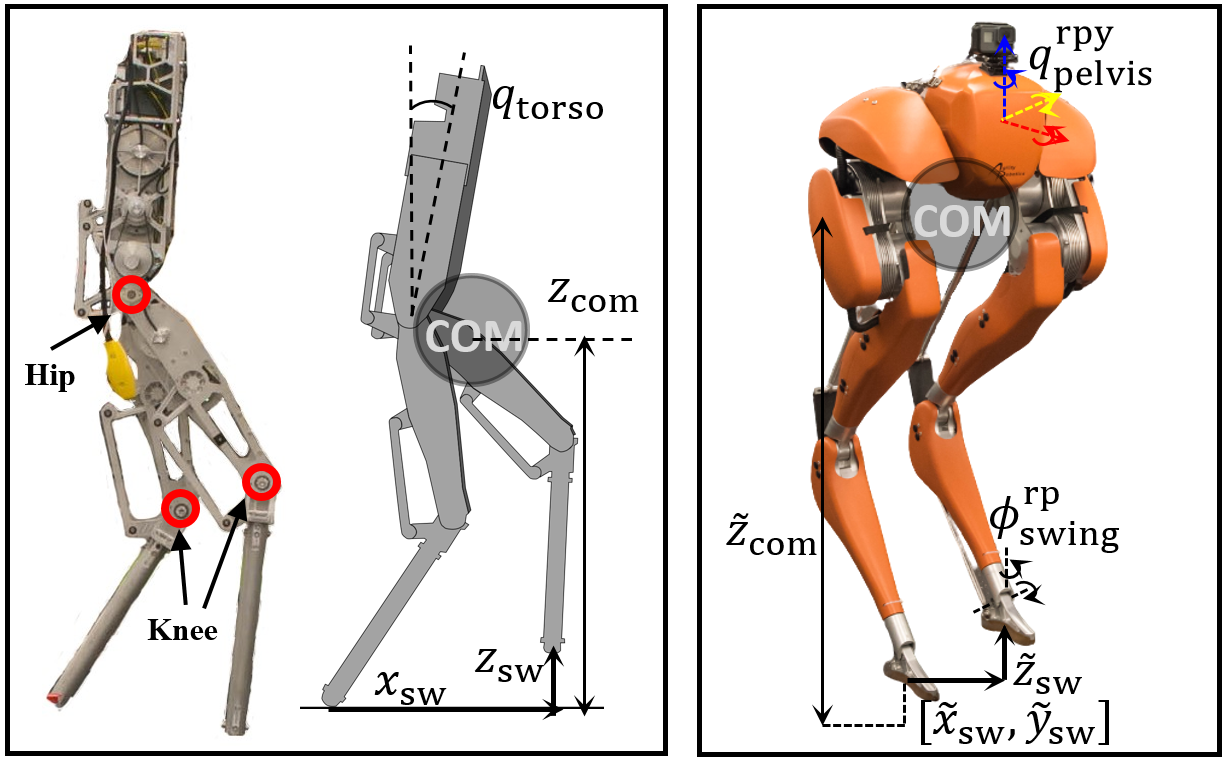}
      \caption{The robot AMBER and Cassie with their output definitions \cite{xiong20213d}.}
      \label{fig:obstacle}
\end{figure}

\sectionnew{Control Realization}: We first apply the H-LIP stepping in \cite{xiong20213d} to generate enough walking data for learning. The step duration $T$, desired vertical COM height $z_0$, and parameters in the trajectory synthesis and output stabilization remain the same for all walking behaviors. Then, the discrete horizontal COM states $\mathbf{x}$ and the actual step size $u$ in the S2S dynamics are extracted. We then solve the linear program in Eq. \eqref{eq:L-S2S} to get the L-S2S dynamics. The SLS-based stepping controllers are synthesized on the L-S2S dynamics with bounded push disturbance. The state and input constraints are characterized from the kinematic limits of the robot. Finally, we evaluate the approach on the robot walking with external pushes. The procedures are summarized in Algorithm 1. The video of the results can be seen online\footnote{\href{https://www.xiaobinxiong.info/research/sls}{https://www.xiaobinxiong.info/research/sls}}.

\begin{algorithm}\caption{System-Level-Synthesis for Robust Walking}\label{alg::SimLoop}
 \begin{algorithmic}[1]
 \renewcommand{\algorithmicrequire}{\textbf{Initialization:}}
 \renewcommand{\algorithmicensure}{\textbf{Customization:}}
 \REQUIRE \textit{Gait Parameters}: $z_0,~T$. $N_\text{push}$, $N_\text{F}$ ($N_\text{F} \leq N_\text{push}$),  $F_\text{ext}$, and \textit{Output Stabilizing Parameters}.
\STATE Generate walking data $\leftarrow$ \textit{H-LIP based approach} \cite{xiong20213d}
\STATE Learn the L-S2S in Eq. \eqref{eq:L-S2S}.
\STATE Optimize SLS problem in Eq. \eqref{eq:conv_problem}
\STATE Desired walking $\mathbf{x}^*, u^*$ from the L-S2S
\WHILE {Simulation/Control Loop}
\IF {SSP}
\STATE Apply Push Forces $F_\text{ext}$ at a certain step 
\STATE Solve $u^e$ from SLS and $u$ from Eq. \eqref{eq:RS2SsteppingGeneral} 
\STATE Construct output $\mathcal{Y}$ e.g. in Eq. \eqref{eq:AMBERoutput} 
\ENDIF \\
\STATE Output Stabilization
\ENDWHILE
 \end{algorithmic}
 \end{algorithm}

  \begin{figure}[t]
      \centering
      \includegraphics[width = .85\columnwidth]{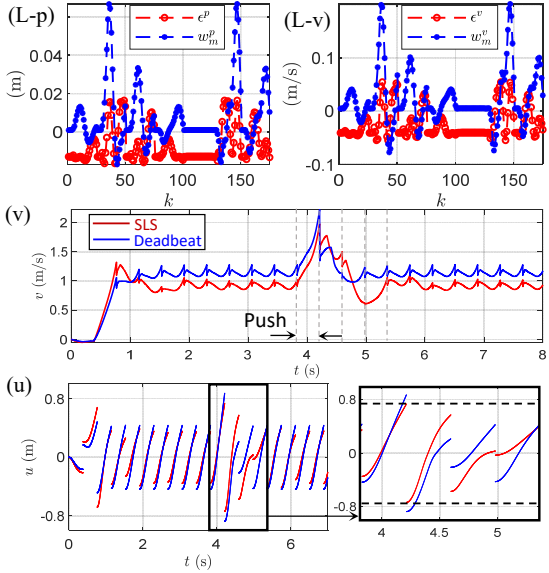}
      \caption{Push rejection on AMBER: (L-p, L-v) the learning residual $\epsilon$ of the L-S2S dynamics (red) compared to the model discrepancy $w_m$ of the H-LIP approximation, (v) the velocity trajectories of the horizontal COM in the sagittal plane under the SLS controller (red) and the deadbeat controller (blue), and (u) the input profiles of the two controllers. }
      \label{fig:AMBER}
\end{figure}

\subsection{Evaluation on AMBER}
We first evaluate the approach on AMBER. The gait parameters are chosen to be $z_0 = 0.7$m, $T = 0.4$s and $q^d_\text{torso} = 0$. As for the L-S2S dynamics, Fig. \ref{fig:AMBER} shows the learning residuals $\epsilon$. In comparison, we also show the model discrepancy of $w_m$ before the learning. The learning significantly reduced the discrepancy with $d^* = [0.0164 \text{m}, 0.0546 \text{m/s}]$, which is very small in practice. 

We then apply the SLS controller to realize walking with $v^d = 1$m/s from a static configuration. For simplicity, we select $\mathbf{U} =[-0.7, 0.7]$m. We choose $F_\text{ext} = 50$N and apply it on the robot for a whole step. The FIR horizon is chosen to be $N_\text{F} = 4$, thus the push disturbance is supposed to be stabilized after 4 steps of walking. Fig. \ref{fig:AMBER} shows the velocity profile under the stepping controller. The robot rejects the push disturbance using 4 steps. The step size and the horizontal COM state are within their given sets

In comparison, we also applied the deadbeat controller for push rejection. The robot is stabilized with fewer steps, but the kinematic constraints on the step sizes are violated as shown in Fig. \ref{fig:AMBER} (u). Thus, the SLS controller is a better choice. Note that the desired step size $u$ is continuously constructed using the current horizontal COM state since the pre-impact one is not known before the impact happens \cite{xiong20213d}. 

\subsection{Evaluation on Cassie}
The procedure of the evaluation on Cassie is similar to that on AMBER. The main difference is that Cassie is a 3D robot, thus the stepping stabilization is decoupled into its sagittal and coronal plane. The decoupling has been realized in \cite{xiong20213d} using the H-LIP based approach. Here, we select the walking in the sagittal plane for the evaluation of the SLS controller. $z_0 = 0.9$m, $T = 0.35$s, $\mathbf{U} = [-0.7, 0.7]$m, and $N_\text{F} = 4$. The model discrepancy $\epsilon$ is shown in Fig. \ref{fig:Cassie}, and the bound is $d^* = [0.0119 \text{m}, 0.0561 \text{m/s}]$, which is also very small. We consider realizing a push recovery behavior with $v^d = 0$. We increase the push force to be $F_\text{ext} = 120$N and apply it to the robot for one step. Fig. \ref{fig:Cassie} shows the simulated push recovery behaviors using the SLS controller and also the deadbeat controller. Unsurprisingly, although both controllers stabilized the push disturbance, the SLS controller outperforms the original deadbeat controller in terms of respecting the kinematic constraints. The maximum step size taken by the SLS is $0.68$m and that taken by the deadbeat controller is $0.83$m, which exceeds $\mathbf{U}$. We also realized this approach on the hardware of the robot (see Fig. \ref{fig:overview}). The procedure is the same. Although we cannot push the robot with a precise force and duration, the SLS approach is shown to be effective to keep the robot balanced while rejecting mild kicks under finite steps ($N_\text{F} = 4$).

 
  \begin{figure}[t]
      \centering
      \includegraphics[width = .85\columnwidth]{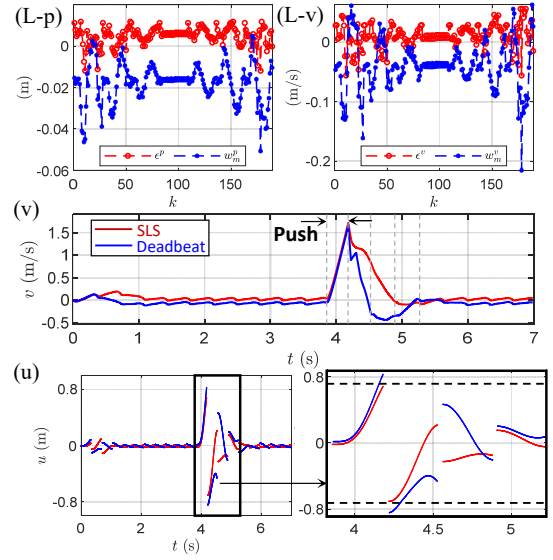}
      \caption{Push recovery on Cassie. The presentations of individual sub-figures are identical to these in Fig. \ref{fig:AMBER}. }
      \label{fig:Cassie}
\end{figure}

\subsection{Discussion}
\vspace{-1mm}
\sectionnewit{Implications:}
The evaluations on two different robots further confirm that a linear approximation to the discrete S2S dynamics of walking is sufficient for walking synthesis, despite the differences in the hybrid structure, inertia distributions of the robot, and impact conditions. The learning on the S2S dynamics is efficiently solved as a convex program with a small set of variables. The simplicity of the L-S2S dynamics also facilitates feedback controllers to be easily synthesized and implemented on complex robots.  



\sectionnewit{Limitations:}
The set of walking data that is used for learning is assumed to be generated with the same parameters in the H-LIP based approach. The parameters include those in the output design, which are the coefficients in the Bézier polynomial, the step frequency, and the vertical COM height. The changes of those parameters are supposed to change the L-S2S dynamics. As a result, the L-S2S that learned from this set of parameters uses the same set of parameters and thus limits the realizable walking behaviors on the robot. For instance, the step frequency cannot be changed under pushes. 


\sectionnewit{Future Work:}
Here, we find a very practical application of the SLS to individual bipedal walking robots. It is possible that this framework can be applied to multi-robot collaborations where the robots are legged (bipedal) robots; the interaction dynamics with the legged systems can possibly be approximated via linear systems. We will continue exploring in this direction of discrete stepping control on legged robots in the future.  

\section{Conclusion}
\label{sec:conclude}
To conclude, we present a system-level-synthesis (SLS) based controller on the learned step-to-step (L-S2S) dynamics for stepping stabilization on bipedal walking robots. The application of the SLS on the L-S2S includes the kinematic constraints of the step sizes in the synthesis and rejects push disturbance in finite steps. The approach is evaluated on two different bipedal robots for realizing walking with desired velocities and robust disturbance rejections, showing great premises of effective, efficient, and robust control synthesis on high-dimensional complex bipedal robotic systems.   

\section{Appendix: Kinematic Constraints}

  \begin{figure}[b]
      \centering
      \includegraphics[width = 0.7\columnwidth]{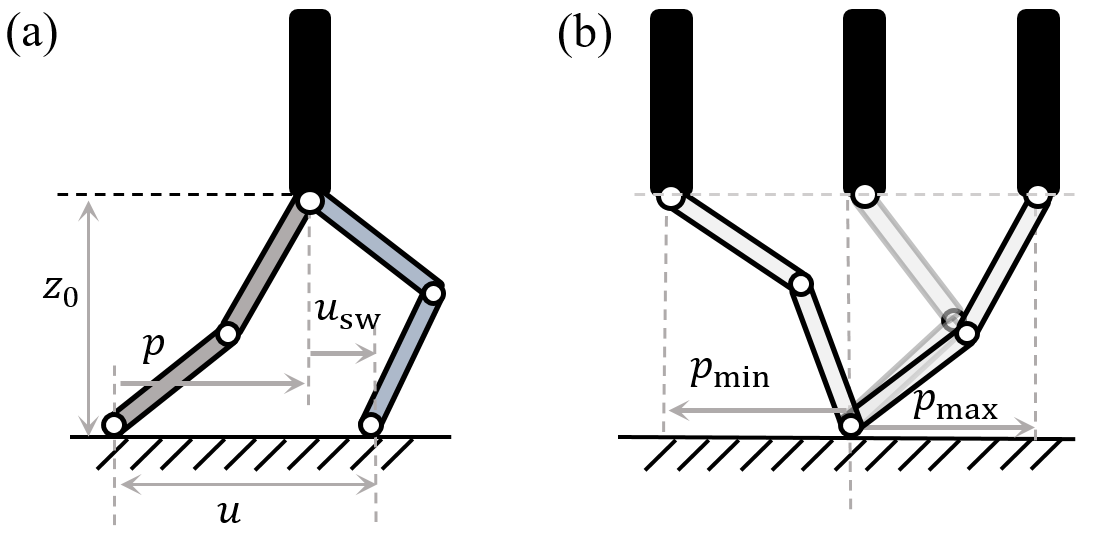}
      \caption{Illustration of the kinematic constraints on a simple robot: (a) definitions and (b) range of motion under the constant COM height.}
      \label{fig:kinematic}
 \end{figure} 
The kinematic constraints of the robot can be converted to the state and input constraints of the S2S dynamics. For simplicity, we use a simple robot (see Fig. \ref{fig:kinematic}) assuming that the COM is located at the hip joint, which actually is a practical assumption on real robots (that are top heavy, e.g., Atlas \cite{feng2015optimization} and Cassie \cite{xiong20213d}). Let $u^\text{sw}$ be the position of the swing foot w.r.t. the hip, and thus $u = p + u^\text{sw}$. Given $z_0$, let $p_\text{min}, p_\text{max}$ be the bounded value for $p$ under the kinematic constraints defined by the robot joints. Then, $p \in \mathbf{P} := [p_\text{min}, p_\text{max}]$, and $u^\text{sw} \in \mathbf{U}^\text{sw}: = [- p_\text{max}, -p_\text{min}]$. To consider the kinematic feasibility, it is more convenient to use $u^\text{sw}$ as the input. The robot S2S dynamics then is:
$\mathbf{x}_{k+1} 
= \tilde{\bar{A}} \mathbf{x}_{{k}} +  \bar{B}u^\text{sw}_k + \bar{C} +  w. 
$
where $\tilde{\bar{A}} = \bar{A}  + \bar{B}[1~~0]$. The L-S2S dynamics with $u_\text{sw}$ as the input then becomes: 
$\mathbf{x}_{k+1} = \tilde{\bar{A}} \mathbf{x}_{{k}} +  \bar{B}u^\text{sw}_k + \bar{C} +  w. 
$
The dynamics are of similar forms of these with $u$ being the input; the only difference is that $\bar{A}$ becomes $\tilde{\bar{A}}$. All the previous derivations apply by replacing $\bar{A}$ by $\tilde{\bar{A}}$. Thus, for convenience, we do not differentiate which form of input $u$ or $u^\text{sw}$ we use. Instead, in either way the kinematic constraints are represented by the input and state constraints in the control synthesis.  

\IEEEpeerreviewmaketitle

\bibliographystyle{IEEEtran}
\bibliography{references}

\end{document}